\definecolor{codegreen}{rgb}{0,0.6,0}
\definecolor{codegray}{rgb}{0.5,0.5,0.5}
\definecolor{codepurple}{rgb}{0.58,0,0.82}
\definecolor{backcolour}{rgb}{0.95,0.95,0.92}
\lstdefinestyle{mystyle}{
    backgroundcolor=\color{backcolour},   
    commentstyle=\color{codegreen},
    keywordstyle=\color{magenta},
    numberstyle=\tiny\color{codegray},
    stringstyle=\color{codepurple},
    basicstyle=\ttfamily\footnotesize,
    breakatwhitespace=false,         
    breaklines=true,                 
    captionpos=b,                    
    keepspaces=true,                 
    numbers=left,                    
    numbersep=5pt,                  
    showspaces=false,                
    showstringspaces=false,
    showtabs=false,                  
    tabsize=2
}
\tiny\color{gray},  % Style of the line numbers
\theoremstyle{plain}
\newtheorem{theorem}{Theorem}[section]
\newtheorem{lemma}[theorem]{Lemma}
\theoremstyle{definition}
\newtheorem{definition}[theorem]{Definition}
\newtheorem{assumption}[theorem]{Assumption}
\theoremstyle{remark}
\icmltitlerunning{QuantAgent: Seeking Holy Grail in Trading by Self-Improving Large Language Model}
\begin{document}

\twocolumn[
\icmltitle{QuantAgent: Seeking Holy Grail in Trading by Self-Improving Large Language Model}

% It is OKAY to include author information, even for blind
% submissions: the style file will automatically remove it for you
% unless you've provided the [accepted] option to the icml2024
% package.

% List of affiliations: The first argument should be a (short)
% identifier you will use later to specify author affiliations
% Academic affiliations should list Department, University, City, Region, Country
% Industry affiliations should list Company, City, Region, Country

% You can specify symbols, otherwise they are numbered in order.
% Ideally, you should not use this facility. Affiliations will be numbered
% in order of appearance and this is the preferred way.
\icmlsetsymbol{equal}{*}

\begin{icmlauthorlist}
\icmlauthor{Saizhuo Wang}{equal,hkust,idea}
\icmlauthor{Hang Yuan}{equal,hkust,idea}
\icmlauthor{Lionel M. Ni}{hkust-gz,hkust}
\icmlauthor{Jian Guo}{idea}
\end{icmlauthorlist}

\icmlaffiliation{hkust}{The Hong Kong University of Science and Technology}
\icmlaffiliation{hkust-gz}{The Hong Kong University of Science and Technology (Guangzhou)}
\icmlaffiliation{idea}{IDEA Research}

\icmlcorrespondingauthor{Jian Guo}{guojian@idea.edu.cn}

% You may provide any keywords that you
% find helpful for describing your paper; these are used to populate
% the "keywords" metadata in the PDF but will not be shown in the document
\icmlkeywords{Machine Learning, ICML}

\vskip 0.3in
]

% this must go after the closing bracket ] following \twocolumn[ ...

% This command actually creates the footnote in the first column
% listing the affiliations and the copyright notice.
% The command takes one argument, which is text to display at the start of the footnote.
% The \icmlEqualContribution command is standard text for equal contribution.
% Remove it (just {}) if you do not need this facility.

%\printAffiliationsAndNotice{}  % leave blank if no need to mention equal contribution
\printAffiliationsAndNotice{\icmlEqualContribution} % otherwise use the standard text.

\newcommand{\figureOverview}{
\begin{figure*}[!t]
    \centering
    \includegraphics[width=\textwidth]{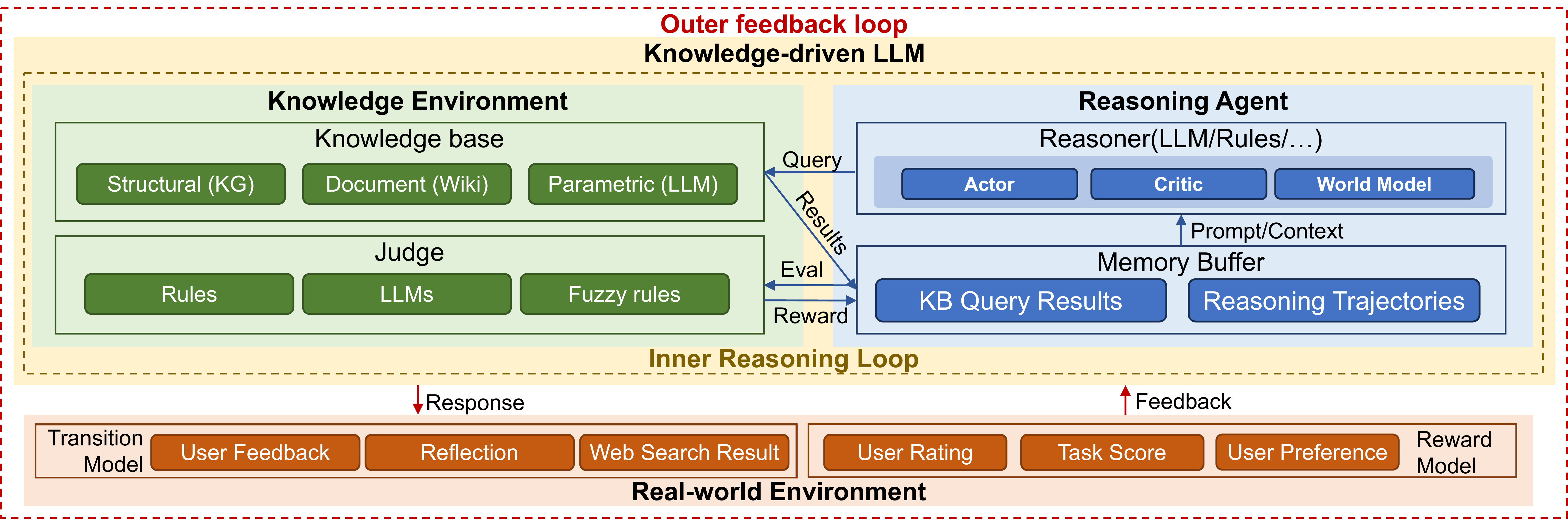}
    \caption{Overview of the two-level loops in AlphaAgent.}
    \label{fig:overview}
\end{figure*}
}

\newcommand{\figureInnerLoop}{
\begin{figure*}[!t]
    \centering
    \includegraphics[width=\textwidth]{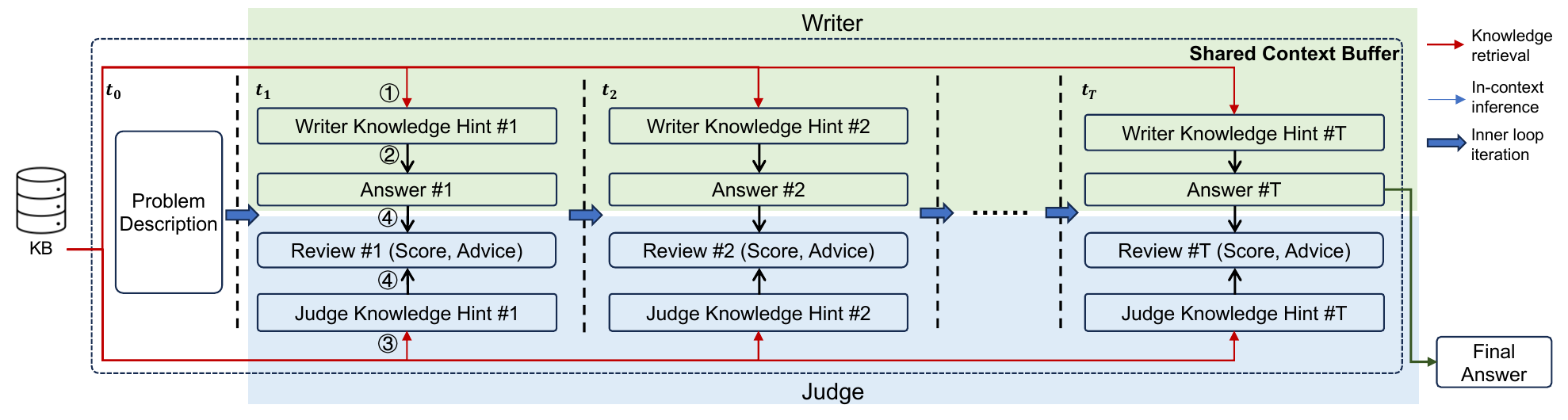}
    \caption{The inner reasoning loop.}
    \label{fig:inner_loop}
\end{figure*}
}

\newcommand{\figureMethod}{
\begin{figure*}[!t]
    \centering
    \includegraphics[width=\textwidth]{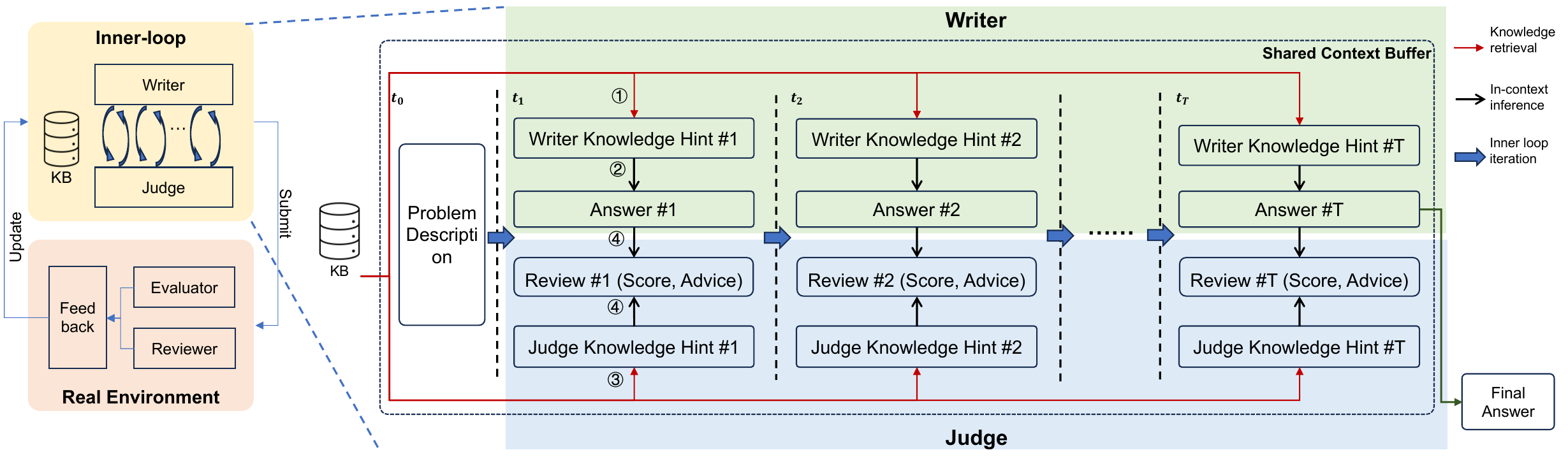}
    \caption{Our proposed framework. \textit{Left:} The outer feedback loop. The agent generates an answer to the problem, submit to the real environment for evaluation, and receives feedback. The feedback is updated to the knowledge base (KB) of the agent that serves further usage. \textit{Right:} The pipeline of the inner reasoning loop. During each loop iteration, the following things happen sequentially: \textcircled{1} The writer retrieves relevant knowledge from the KB to inform the initial answer; \textcircled{2} The writer generates an answer utilizing the retrieved knowledge; \textcircled{3} The judge retrieves relevant knowledge from KB to hint its review; \textcircled{4} The answer undergoes review, receiving a score and advice for improvement, which is then used for the next iteration. Both the writer and the judge are LLMs with specifically designed prompts and each step in the iteration uses buffer contents as context. This iterative process stops when the maximum number of steps ($T$) is reached or the score is high enough.}
    \label{fig:method}
\end{figure*}
}

\newcommand{\figureOuterLoop}{
\begin{figure}[!t]
    \centering
    \includegraphics[width=0.65\linewidth]{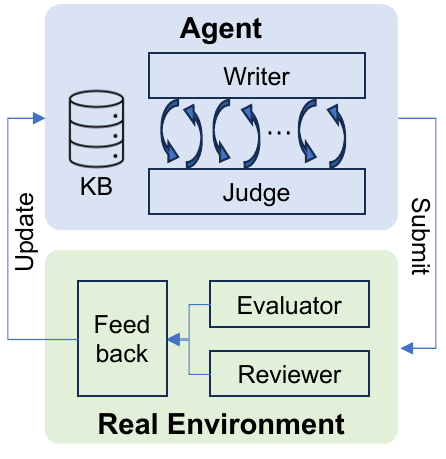}
    \caption{The outer feedback loop.}
    \label{fig:outer_loop}
\end{figure}
}

\newcommand{\figureMethodOverview}{
\begin{figure*}[!t]
    \centering
    \begin{subfigure}[b]{0.3\textwidth}
        \centering
        \includegraphics[width=\linewidth]{figs/outer_loop.pdf}
        \caption{Overview of our proposed framework.}
        \label{fig:outer_loop}
    \end{subfigure}
    \hfill % Use \hfill to add horizontal space between the figures if needed
    \begin{subfigure}[b]{0.3\textwidth}
        \centering
        \includegraphics[width=\linewidth]{figs/inner_loop.pdf}
        \caption{The inner reasoning loop.}
        \label{fig:outer_loop}
    \end{subfigure}
    \caption{Combined figure of the inner reasoning loop and the outer feedback loop.}
    \label{fig:combined_loops}
\end{figure*}
}

\newcommand{\figureProofRoute}{
\begin{figure}[!t]
    \centering
    \includegraphics[width=0.7\linewidth]{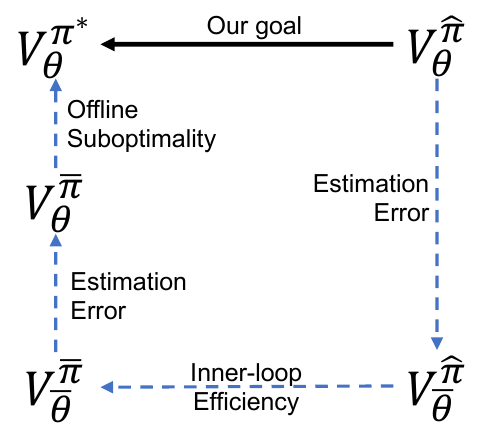}
    \caption{The route of our proof}
    \label{fig:proof_route}
\end{figure}
}

\newcommand{\figureAnalysis}{
\begin{figure*}[!t]
    \centering
    \includegraphics[width=\textwidth]{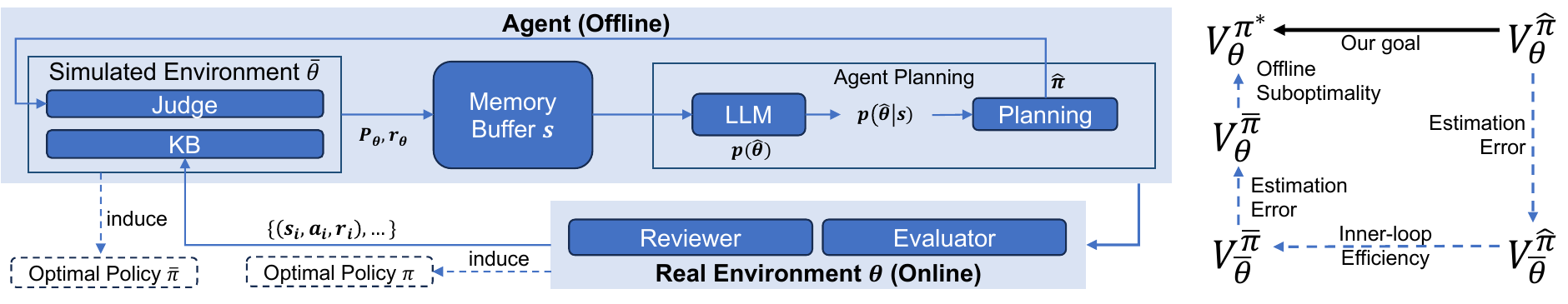}
    \caption{The analysis framework. \textit{Left:} An abstract view of the inner and outer loop, where three different environments (real environment $\theta$, simulated environment $\bar{\theta}$ and the LLM-inferred environment $\hat{\theta}$) are characterized. \textit{Right:} The chain of proofs that links the agent's actual policy $\hat{\pi}$ and the optimal policy in the real environment $\pi^*$}
    \label{fig:analysis}
\end{figure*}
}

\newcommand{\figureRelevance}{
\begin{figure*}[!t]
    \centering
    \includegraphics[width=\textwidth]{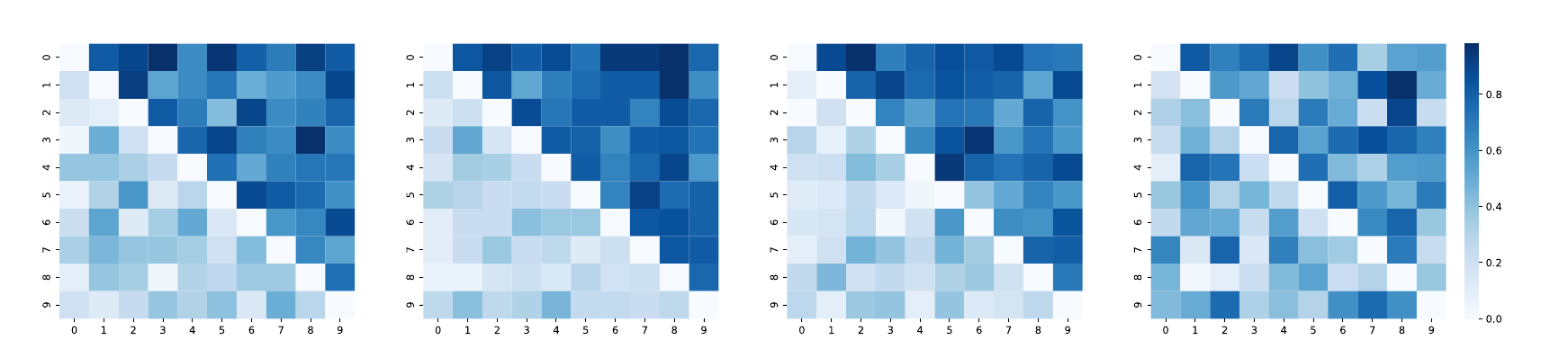}
    \caption{Trading idea relevance comparison. From left to right: with inner and outer loop; without inner loop, only outer loop;without outer loop, only inner loop; no inner or outer loop.}
    \label{fig:relevance}
\end{figure*}
}

\newcommand{\figureSelfImprovingRelevance}{
\begin{figure*}[!t]
    \centering
    \includegraphics[width=\textwidth]{figs/relevance.pdf}
    \caption{Self-improving code-idea relevance}
    \label{fig:relevance}
\end{figure*}
}

\newcommand{\figureSelfImprovingGrpIC}{
\begin{figure*}[!t]
    \centering
    \includegraphics[width=\textwidth]{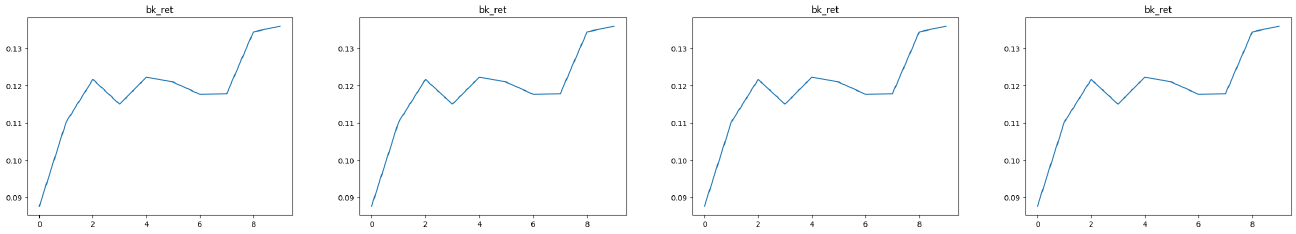}
    \caption{Self-improving group mean IC}
    \label{fig:mean_grp_ic}
\end{figure*}
}

\newcommand{\figureSingleAlphaPerf}{
\begin{figure*}[!t]
    \centering
    \includegraphics[width=\textwidth]{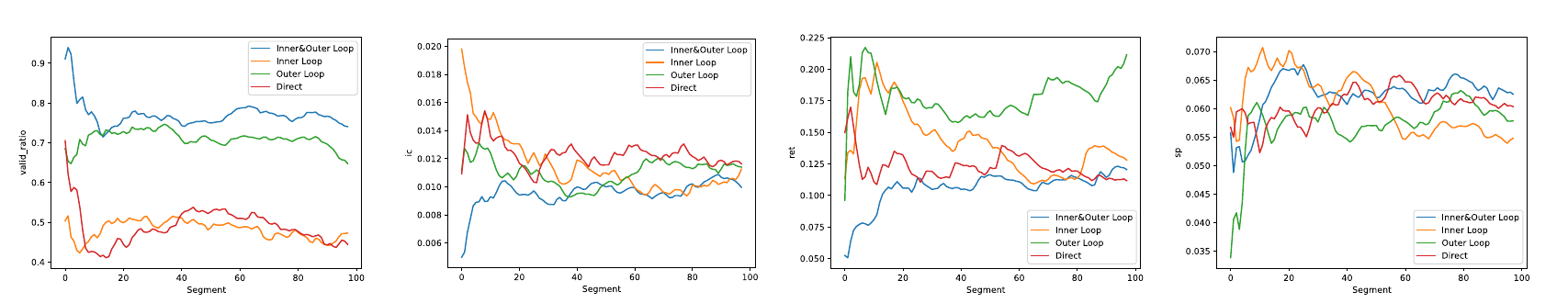}
    \caption{Evolving curve of averaged grouped single-alpha performance. From left to right: entry valid ratio, IC score, return, Sharpe ratio}
    \label{fig:single_alpha_perf}
\end{figure*}
}

\newcommand{\figureSelfImprovingMdl}{
\begin{figure}[!t]
    \centering
    \includegraphics[width=0.7\linewidth]{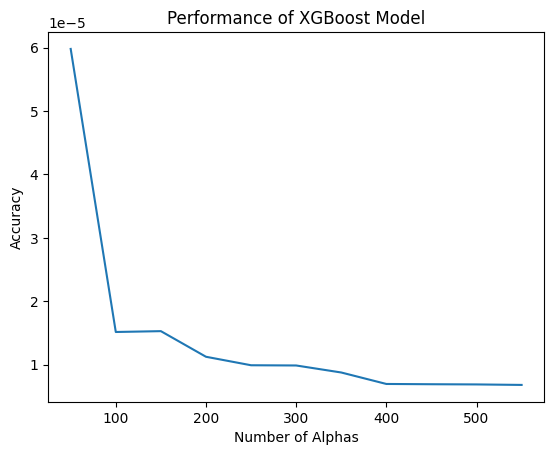}
    \caption{Increasing predictive accuracy in the trained model as the number of alpha accumulates. Y-axis indicates the MSE error}
    \label{fig:model_improve}
\end{figure}
}

\newcommand{\figureSelfImprovingGrpValid}{
\begin{figure*}[!t]
    \centering
    \includegraphics[width=\textwidth]{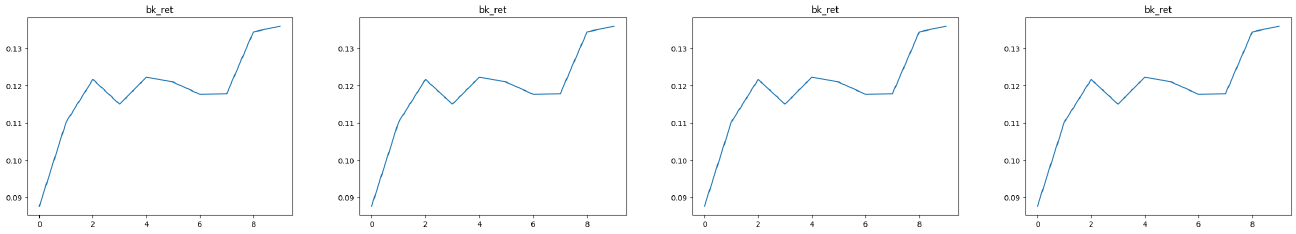}
    \caption{Self-improving group mean return}
    \label{fig:mean_grp_valid}
\end{figure*}
}

\newcommand{\figureSelfImprovingAccuModel}{
\begin{figure*}[!t]
    \centering
    \includegraphics[width=\textwidth]{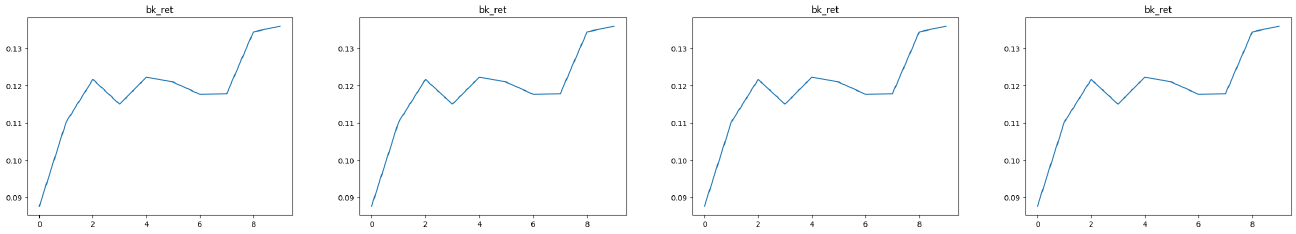}
    \caption{Self-improving group mean model performance}
    \label{fig:accu_perf}
\end{figure*}
}

\newcommand{\figureSelfImprovingGrpModel}{
\begin{figure*}[!t]
    \centering
    \includegraphics[width=\textwidth]{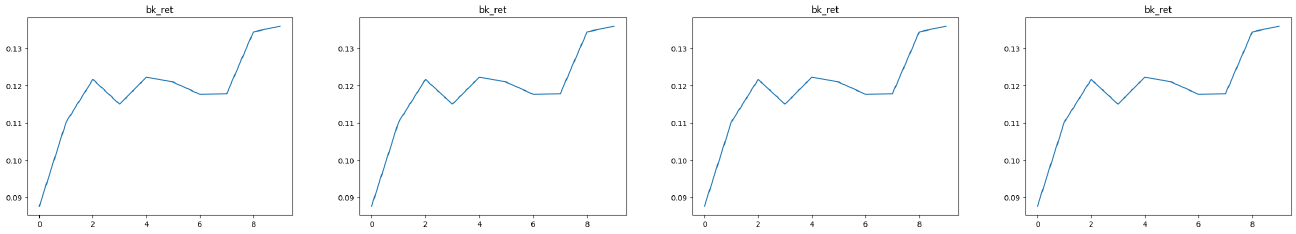}
    \caption{Self-improving accumulated model performance}
    \label{fig:grp_model_perf}
\end{figure*}
}

%placeholder

\newcommand{\algInnerLoop}{
\begin{algorithm}[!t]
    \caption{The inner reasoning loop}
    \label{alg:alpha_agent}
    \begin{algorithmic}[1]
        \item[\textbf{Input:}] The problem $p$, Knowledge base $\phi$, Reward threshold $\beta$, Max iteration  limit $T$.
        \item[\textbf{Output:}] An answer $\alpha$
        \STATE \texttt{Judge} $\leftarrow$ LLM prompted with judge prompt
        \STATE \texttt{Agent} $\leftarrow$ LLM prompted with agent prompt
        \STATE \texttt{ctx} $\leftarrow$ \{$p$\}
        \STATE $t \leftarrow 0$
        \REPEAT
            \STATE $K_1$ $\leftarrow$ retrieve(\texttt{Agent}, \texttt{ctx}, $\phi$)
            \STATE \texttt{ctx} $\leftarrow$ \texttt{ctx} $\cup$ \{$K_1$\}
            \STATE $\hat{\alpha} \leftarrow $ \texttt{Agent}.generate(\texttt{ctx})
            \STATE $K_2 \leftarrow $ retrieve(\texttt{Judge}, \texttt{ctx}, $\phi$)
            \STATE \texttt{ctx} $\leftarrow$ \texttt{ctx} $\cup$ \{$\hat{\alpha}, K_2$\}
            \STATE \texttt{score}, \texttt{review} $\leftarrow$ \texttt{Judge}.eval($\hat{\alpha}$, \texttt{ctx})
            \STATE $t \leftarrow t + 1$
            \STATE \texttt{ctx} $\leftarrow$ \texttt{ctx} $\cup$ \{\texttt{review}\}
        \UNTIL{$r \ge \beta$ or $t \ge T$}
        \STATE \textbf{return} $\hat{\alpha}$
    \end{algorithmic}
\end{algorithm}
}

\newcommand{\algOuterLoop}{
\begin{algorithm}[!t]
    \caption{The Outer Feedback Loop}
    \label{alg:outer_loop}
    \begin{algorithmic}[1]
        \item[\textbf{Input:}] Problem description $p$, Other inner loop params $c$, Number of outer loop iterations $K$
        \item[\textbf{Output:}] Optimized agent knowledge base $\phi$
        \STATE $\phi \leftarrow \emptyset$
        \FOR{$i = 1$ \textbf{to} $K$}
            \STATE $a$ $\leftarrow$ innerLoop($p$, $\phi$, $c$)
            \STATE $f \leftarrow$ Real-world feedback for $a$ given problem $p$
            \STATE Update $\phi$ with $a$ and $f$
        \ENDFOR
        \STATE \textbf{return} $\phi$
    \end{algorithmic}
\end{algorithm}
}

\begin{abstract}
Autonomous agents based on Large Language Models (LLMs) that devise plans and tackle real-world challenges have gained prominence.
However, tailoring these agents for specialized domains like quantitative investment remains a formidable task. 
The core challenge involves efficiently building and integrating a domain-specific knowledge base for the agent's learning process.
This paper introduces a principled framework to address this challenge, comprising a two-layer loop.
In the inner loop, the agent refines its responses by drawing from its knowledge base, while in the outer loop, these responses are tested in real-world scenarios to automatically enhance the knowledge base with new insights.
We demonstrate that our approach enables the agent to progressively approximate optimal behavior with provable efficiency.
Furthermore, we instantiate this framework through an autonomous agent for mining trading signals named QuantAgent. Empirical results showcase QuantAgent's capability in uncovering viable financial signals and enhancing the accuracy of financial forecasts.
\end{abstract}
\section{Introduction}
\label{sec:intro}

% Briefly introduce LLM agents
The surge of large language models (LLMs) \cite{openai_gpt-4_2023, touvron_llama_2023} has ignited significant advancements in the realm of autonomous agents \cite{weng_llm_2023}, expanding their capabilities to plan and solve complex, real-world tasks.
Applying these powerful language models to specialized areas such as quantitative investment \cite{wang_alpha-gpt_2023} brings its own set of challenges. One key issue is how to feed these models with specific knowledge from the field of finance, which would enable them to tackle complex problems like an expert in that domain.

The primary issue at hand is the source and integration of domain knowledge. In specialized domains that demand a wealth of experience and technical acumen, constructing a comprehensive knowledge base often requires intensive human effort. Meanwhile, traditional methods have relied on incorporating this knowledge into the agent's parametric memory through fine-tuning \cite{hu_lora_2021} or by referencing external databases during inference via retrieval-augmented generation techniques \cite{karpukhin_dense_2020}.

% Propose challenges
However, such approaches face significant hurdles. Crafting an extensive and accurate domain knowledge base is not only costly in terms of human labor but may also be impractical or unattainable for certain domains, such as developing a database of financial alphas \cite{tulchinsky_introduction_2019} for quantitative investment strategies.
Additionally, there is room in the current research landscape for a more principled framework that could provide a systematic assessment of knowledge integration's impact on the agent, thereby supporting the advancement of knowledge-enhanced agents.

This paper proposes a principled two-layer framework designed to autonomously develop a domain-specific knowledge base with minimal human intervention while maintaining high quality. At its core, the framework operates on a nested loop system: within the inner loop, the agent refines its responses by interacting iteratively with a simulated environment defined by an internal knowledge base. Conversely, in the outer loop, the agent's responses are evaluated against the real-world environment, which automatically generates feedback to further enrich the internal knowledge base. This iterative process propels the agent towards improved performance, ultimately allowing for the autonomous accrual of a rich knowledge base.

We bolster the framework with a theoretical analysis, demonstrating that both the inner and outer loops can efficiently converge towards an optimal solution. This convergence is substantiated by applying analytical techniques from reinforcement learning \cite{liu_reason_2023, jin_is_2021}, framing the self-improvement mechanism within the context of a Markov Decision Process. The resulting framework not only encapsulates a variety of existing self-improving methods \cite{wang_voyager_2023, romera-paredes_mathematical_2023} but is also provably efficient under certain assumptions.

The practical implementation of this framework is realized through QuantAgent, an autonomous agent tasked with mining financial signals. The agent's knowledge base comprises a collection of financial signals, each documented with its implementation details, the underlying trading idea, performance metrics, and expert reviews. This setup epitomizes the scenarios where constructing a robust knowledge base is traditionally challenging, and where evaluation—backtesting in this context—can be conducted programmatically. Our empirical findings validate QuantAgent's capacity for self-improvement, with the agent successfully generating a comprehensive signal base that enables more accurate financial forecasting.
% Additional ablation studies offer insights into the contributory effects of different framework modules.

The remainder of this paper is organized as follows: we first review related works in the field, then introduce our framework and its operational pipeline. Then we present an analysis section discussing the efficiency and practicality of our algorithm, leading to a presentation of QuantAgent as an instantiation for financial signal mining. We conclude with empirical results accompanied by a discussion of broader implications.

\section{Related Works}

\subsection{LLM-based Autonomous Agents}
Recent studies \cite{weng_llm_2023, sumers_cognitive_2023} have tried to formally describe the anatomy of an autonomous agent. Despite the differences in terms, an llm-based autonomous agent basic consists of a set of tools (where the ability of using tools can be learned \cite{schick_toolformer_2023}), a memory (be it long-term memory such as KB or short-term memory characterized as context), a planning algorithm \cite{yao_react_2022, yao_tree_2023} that directs what to do. In practice there have been many applications \cite{significant_gravitas_autogpt_2024, nakajima_yoheinakajimababyagi_2024} that resort to real-world tasks.

\subsection{Adapting LLM Agents to Domain-specific Tasks}
% Given the agent framework, we still need to 
% discuss RAG, fine-tuning
Adapting large language model (LLM)-based autonomous agents to domain-specific tasks involves techniques such as retrieval-augmented generation (RAG) \cite{gao_retrieval-augmented_2024} and fine-tuning \cite{wang_aligning_2023}. RAG enhances LLMs' ability to generate more accurate and contextually relevant responses by integrating external knowledge bases, proving particularly beneficial in specialized fields requiring extensive factual information \cite{sun_think--graph_2023}. Fine-tuning, on the other hand, adjusts pre-trained models on domain-specific datasets, allowing for improved performance in niche tasks by making the models more attuned to the unique lexicon and nuances of the domain. These methodologies facilitate the application of LLM agents across various sectors by significantly boosting their precision and relevance.

\subsection{Self-improving LLM agents}
% discuss self-improving literature
Building upon the foundation of adapting LLM agents to domain-specific tasks, self-improvement emerges as a crucial next step, addressing the challenge of acquiring domain-specific knowledge, which can be expensive or difficult to obtain. These self-improvement agents iteratively learn from their environment through feedback. Recent studies have concentrated on environments including gaming \cite{wang_voyager_2023}, programming \cite{haluptzok_language_2022}, and mathematical problem-solving \cite{zhu_large_2023, romera-paredes_mathematical_2023, trinh_solving_2024}, which naturally offer the continuous feedback necessary for such learning processes. These environments not only provide rich, dynamic data for the agents to learn from but also enable the practical application of self-improvement techniques, reducing the reliance on manually curated, domain-specific datasets. This approach facilitates the development of LLM agents that can autonomously enhance their capabilities, adapt to new challenges, and refine their knowledge base over time, making self-improvement an integral component of their design.

\section{Framework}
\label{sec:method}
In this section, we elaborate on the two-layered architecture of our proposed framework.
The outer loop, depicted on the left side of Figure \ref{fig:method}, represents the interaction with the real-world environment. Here, the agent's generated output is subject to real-world evaluation, and the feedback obtained is incorporated into the agent’s knowledge base, which in turn informs subsequent iterations.
The right side of Figure \ref{fig:method} details the inner reasoning loop, where an iterative dialogue between the writer and judge components takes place. This simulated environment leverages the internal knowledge base and is where the agent's reasoning and refinement processes occur. The agent iterates through this loop, harnessing the shared context buffer for knowledge retrieval and inference until a satisfactory solution is produced or a pre-set threshold is reached.
\figureMethod
\algInnerLoop

\subsection{The Inner Reasoning Loop}
The inner loop functions as a simulated reasoning environment where an LLM or rule-based system interacts with a knowledge repository. It employs a memory buffer that initially contains the user's query and is enriched iteratively with data from the knowledge base. The agent poses queries to the knowledge base, which in turn provides pertinent information. A judge, which could be a rule set or an LLM, evaluates this information to ensure it satisfies the user's query, concluding the loop when a response is formulated or a predefined condition is met.

\subsubsection{Components}
\paragraph{Knowledge Base}
The knowledge base acts as a repository containing records of the agent's prior outputs, associated performance scores, and feedback. The agent formulates queries to extract necessary information, which are then processed into embeddings for efficient retrieval of relevant records, aiming to optimize the trade-off between performance accuracy and response variety.

\paragraph{Context Buffer}
The context buffer maintains a record of the ongoing interaction, holding all previous exchanges and information. This cumulative record ensures consistency and coherence in the agent's reasoning process, allowing past knowledge to inform future responses.

\paragraph{Writer}
The writer is responsible for constructing responses based on the data retrieved from the knowledge base. It is designed to incrementally refine its outputs, integrating feedback from the judge to enhance the quality of subsequent responses.

\paragraph{Judge}
The judge serves as an evaluator, providing feedback by scoring the writer's outputs. Its effectiveness as an assessor is contingent on the quality of the knowledge base, and it plays a critical role in calibrating the agent's outputs to improve accuracy over time.

\subsubsection{Procedure}
\paragraph{One Iteration}
Within a single iteration, the writer commences by sourcing relevant knowledge from the knowledge base. Utilizing this knowledge, the writer formulates a response. This response is then assessed by the judge, who provides a score and feedback, which is incorporated back into the context buffer to refine the writer's next response.

\paragraph{Iterative Process}
The iterative process is predicated on the accumulation of information leading to progressively improved responses. The loop is repeated, leveraging feedback to incrementally enhance the quality of responses, until a predefined performance threshold is achieved or an optimal response is determined.

\textit{Remarks:} The guiding principle behind the inner loop is that through a robust mechanism for generating responses, the writer will, with sufficient iterations, accumulate adequate information from the knowledge base to consistently satisfy the judge's criteria. This iterative enrichment is expected to guide the writer towards producing the optimal answer. A formal analysis of this convergence process is presented in Sec. \ref{sec:analysis_efficiency_inner}.

\algOuterLoop
\subsection{The Outer Feedback Loop}
The outer loop encapsulates the agent's iterative interactions with the real-world environment, where its generated outputs are evaluated and refined.

\paragraph{Environment Feedback}
The environment provides feedback in the form of performance scores and qualitative reviews, which may be generated by a sophisticated LLM. This feedback serves to inject new insights into the agent's decision-making process, with the potential to enhance future performances.

\paragraph{Knowledge Update}
Following the receipt of feedback, the knowledge base undergoes an update process. This process incorporates sanity checks to ensure the integrity and relevance of the new information. The rules for updating are crafted to maintain a comprehensive database that includes a spectrum of experiences, both successful and otherwise. This approach ensures a diverse learning context for the agent, promoting a nuanced understanding and adaptation to the real-world scenarios.

\textbf{Remarks:} A distinct contrast exists between the judge's role in the inner loop and the environment's feedback in the outer loop. The former can be characterized as providing rapid, cost-effective, albeit less precise evaluations based on a limited knowledge set. In contrast, the latter is akin to a standard of truth, often more resource-intensive but offering a higher fidelity of assessment. As the volume of iterations within the outer loop increases, the inner loop judge accumulates extensive real-world experiences, which gradually refines its capacity for delivering high-fidelity evaluations and feedback.

\subsection{Comparison with existing methods}
Many existing methods can be regarded as specific implementations of our framework. On the one hand, if we discard the outer loop, then the family of self-refinement methods \cite{madaan_self-refine_2023} can be regarded, with two LLMs acting as actor and critic either with or without a knowledge base. On the other hand, if we reduce the inner loop to a vanilla retrieval-augmented generation procedure, then many self-improving methods such as Voyager \cite{wang_voyager_2023} and FunSearch \cite{romera-paredes_mathematical_2023} can be implemented.

\section{Analysis}
\label{sec:analysis}
In this section we will analyze the efficiency (i.e. the agent algorithm can converge to a optimal solution to this problem asymptotically), and cost (in terms of both token cost and inference time cost). The motivation of our analysis is 1). understand the effect of each design component in our system 2). analyze whether it is practical to deploy it to real world.

\subsection{Efficiency}
\label{sec:analysis_efficiency}
We prove the efficiency of our agent algorithm at two levels: the inner loop level and the outer loop level. Efficiency at the inner loop indicates that the agent can reach the best possible answer to the problem given current knowledge base, and efficiency in the outer loop indicate that the performance gap between the optimal policy in simulated environment characterized by the KB and that in the real world can converge as the outer loop iteration increases. The two efficiency guarantees together forms the guarantee that the policy can converge to global optimal, indicating the efficiency of our agent algorithm.
To analyze the efficiency, we first introduce the formulation of our problem.

\figureAnalysis

\subsubsection{Formulation}
\label{sec:analysis_efficiency_formulation}
We formulate the process of generating an answer to the problem (essentially the inner loop) as a Markov Decision Process (MDP). This MDP is defined by the tuple $ (\mathcal{S}, \mathcal{A}, T, r, \gamma) $, where each component is characterized as follows:

\begin{itemize}[leftmargin=1em]
  \item \textbf{State Space} $ \mathcal{S} $: The state at any time $ t $, denoted as $ s_t $, is a composite of the original problem $ p $, and all the information contained in the shared context buffer. Initially $ s^p_{t_0} $ is an empty set, indicating the start of the reasoning process. $s_t$ that encapsulates all historical information explored up to time $t$, and we call it the information state \cite{liu_reason_2023}.
  
  \item \textbf{Action Space} $ \mathcal{A} $: Action is w.r.t. the writer since only for writer action is meaningful. Action is like generating an answer to the problem, based on current information at hand. action can also be knowledge base queries since it also changes the information state.
  
  \item \textbf{Transition Function} $ T $: Defined as $ T(s'|s, a) $, states that based on current information at hand, what will the information state change given an action. This has several implementations: 1). KB responding to a query action 2). Judge reviewing an answer action. So both the KB and the judge constitutes the transition function.
  
  \item \textbf{Reward Function} $ r $: This function, $ r(s_t) $, assigns a value to the information state $s_t$. $r$ can be either a continuous value, indicating the absolute score of an answer, or a $0/1$ value indicating whether the answer is sufficient for this problem.
  
  \item \textbf{Discount Factor} $ \gamma $: It defines the weight of future rewards in the agent's consideration, shaping the strategic depth of the reasoning process by emphasizing the importance of long-term outcomes. In practice, we use $\gamma \in (0, 1)$ to ensure that the value function $V^\pi_\theta(s)$ is bounded, which is important for analysis.
\end{itemize}

Given a policy $\pi: \mathcal{S} \mapsto \mathcal{A}$ and an environment parameterized by $\theta$, we define its value function and Q-function as follows:
\begin{equation}
    V^\pi_\theta(s) = \mathbb{E}\bigl[\sum_{t=0}^\infty \gamma^t r_\theta(s_t, a_t) | s_0=s\bigr]
\end{equation}
\begin{equation}
    Q^\pi_\theta(s, a) = \mathbb{E}\bigl[\sum_{t=0}^\infty \gamma^t r_\theta(s_t, a_t)|s_0=s, a_0=a\bigr],
\end{equation}
where the expectation is taken over $a_t \sim \pi(s_t)$ and $s_{t+1} \sim T_\theta(\cdot | s_t, a_t)$ for all $t > 0$. Here $\theta$ can be considered as knowledge base contents. Specifically, $\theta$ affects the transition function $T_\theta(\cdot | s_t, a_t)$ by determining on the query results of a given query action, and the reward function $r_\theta(s)$ is also affected by the knowledge base since the judge depend on it. The goal of the inner loop, therefore, is to learn a policy that maximizes $V^\pi_\theta(s)$ for all $s \in \mathcal{S}$ so as to find enough information for arbitrary user questions.

Our goal is to prove that our agent is provably efficient in both the inner and outer loop. Namely, the Bayesian regret of the agent
\begin{equation}\label{equ:overall_regret}
    \mathcal{R}(TK) = \mathbb{E}_{} \bigl[ \sum_{k=1}^K \sum_{t^k=1}^T V^{\pi^*}_\theta(s_t) - V^{\hat{\pi}_{t^k}}_\theta (s_t) \bigr],
\end{equation}
is sublinear in $KT$.

\subsubsection{The inner loop}
\label{sec:analysis_efficiency_inner}
Following \cite{liu_reason_2023}, we make the following assumptions on the inner loop:
\begin{assumption}\label{assump:llm_bayesian_inference}
    During the in-context inference step, the LLM writer implicitly performance Bayesian inference of the simulated environment's parameter.
\end{assumption}
Assumption \ref{assump:llm_bayesian_inference} essentially states that LLM performs implicit Bayesian inference $p(\hat{\theta}|s)$ of the environment parameter $\hat{\theta}$ given the information state $s$ and its prior knowledge $p(\hat{\theta})$ obtained via pretraining. This mechanism has also been verified in previous works \cite{xie_explanation_2021}. Based on the estimated environment parameter, we employ a planning mechanism to ensure optimality on $\hat{\theta}$:
\begin{definition}
    ($\epsilon$-Optimality \cite{liu_reason_2023}) Under an environment parametrized by $\theta$, a policy $\pi$ satisfies $\epsilon$-optimality if the following condition holds with high probability
    \begin{equation}\label{equ:eps_optim}
        \max_{s} V^{\text{PL}^*(\theta)}_\theta(s) - V^\pi_\theta(s) \le \epsilon,
    \end{equation}
\end{definition}

In this way, provable efficiency in the inner loop states that by iteratively refining its answer under the growing information and judge's feedback, 

\begin{lemma}\label{lem:rafa_result}
    The Bayesian regret of the planning agent in the inner loop is sublinear in the number of inner loop iterations $T$
\end{lemma}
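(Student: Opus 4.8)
The plan is to establish the lemma by adapting the posterior-sampling regret analysis of \cite{liu_reason_2023} to our inner-loop MDP, where the role of a learned transition/reward model is played by the LLM writer's implicit Bayesian posterior $p(\hat{\theta}\mid s)$ guaranteed by Assumption~\ref{assump:llm_bayesian_inference}. First I would decompose the per-step Bayesian regret $V^{\pi^*}_\theta(s_t)-V^{\hat{\pi}_t}_\theta(s_t)$ by inserting the value of the planner's output $\text{PL}^*(\hat{\theta}_t)$ evaluated under the sampled environment $\hat{\theta}_t$:
\begin{align}
V^{\pi^*}_\theta(s_t)-V^{\hat{\pi}_t}_\theta(s_t)
&=\bigl(V^{\pi^*}_\theta(s_t)-V^{\text{PL}^*(\hat{\theta}_t)}_{\hat{\theta}_t}(s_t)\bigr) \nonumber\\
&\quad+\bigl(V^{\text{PL}^*(\hat{\theta}_t)}_{\hat{\theta}_t}(s_t)-V^{\hat{\pi}_t}_{\hat{\theta}_t}(s_t)\bigr) \nonumber\\
&\quad+\bigl(V^{\hat{\pi}_t}_{\hat{\theta}_t}(s_t)-V^{\hat{\pi}_t}_\theta(s_t)\bigr),
\end{align}
which I label the posterior-sampling gap, the planning suboptimality, and the model-evaluation error, respectively.

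The second step is to dispatch the first two terms almost immediately. The planning suboptimality is bounded by $\epsilon$ for each $t$ directly from the $\epsilon$-optimality definition \eqref{equ:eps_optim}, contributing at most $\epsilon T$ to the cumulative regret. For the posterior-sampling gap I would invoke the key fact that, conditioned on the history $\mathcal{F}_t$ available at iteration $t$, the sampled parameter $\hat{\theta}_t$ drawn from the posterior and the true parameter $\theta$ are identically distributed; since $\pi^*=\text{PL}^*(\theta)$, taking conditional expectations gives $\mathbb{E}[V^{\pi^*}_\theta(s_t)\mid\mathcal{F}_t]=\mathbb{E}[V^{\text{PL}^*(\hat{\theta}_t)}_{\hat{\theta}_t}(s_t)\mid\mathcal{F}_t]$, so this term vanishes in expectation by the tower property.

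The remaining and hardest part is controlling the cumulative model-evaluation error $\sum_{t=1}^T \mathbb{E}\bigl[V^{\hat{\pi}_t}_{\hat{\theta}_t}(s_t)-V^{\hat{\pi}_t}_\theta(s_t)\bigr]$. Here I would apply a value-difference (simulation) lemma to rewrite each summand as the discounted sum of one-step Bellman residuals between $\hat{\theta}_t$ and $\theta$ accumulated along the trajectory rolled out by $\hat{\pi}_t$, thereby converting the value gap into an in-sample prediction error of the posterior model on the transitions and rewards it actually encounters. The crux is then a concentration argument: because $\hat{\theta}_t$ is sampled from a posterior that is sequentially updated on the observed information states, each such prediction error is tied to the posterior's predictive uncertainty, and the telescoping of these uncertainties over $t$ is bounded by an information-gain (or eluder-dimension) quantity that grows only sublinearly in $T$. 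Combining the $\epsilon T$ planning term with the $O(\sqrt{T})$-type model-error bound yields a total Bayesian regret that is sublinear in $T$, as claimed. I expect the main obstacle to be making the posterior-concentration/information-gain bound rigorous in our setting, since the information state $s_t$ is high dimensional and the LLM posterior is only characterized implicitly; bridging Assumption~\ref{assump:llm_bayesian_inference} to a quantitative eluder-type complexity measure is the delicate step.
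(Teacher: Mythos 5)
Your proposal takes essentially the same route as the paper: the paper's proof of this lemma is simply a deferral to the posterior-sampling analysis of \cite{liu_reason_2023} (and \cite{wang_principled_2023}), and your decomposition into posterior-sampling gap, planning suboptimality, and model-evaluation error---handled respectively by the Thompson-sampling conditional-distribution identity, $\epsilon$-optimality of the planner, and a value-difference-plus-information-gain bound---is precisely the argument of that cited work. The one caveat is that your $\epsilon T$ planning term is linear in $T$, so the claimed sublinearity requires treating $\epsilon$ as negligible or vanishing in $T$, which is exactly the gloss the paper itself makes when it says the sub-optimality ``depends only on the model estimation error.''
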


\begin{proof}
    Detailed proof can be seen in \cite{liu_reason_2023, wang_principled_2023}
\end{proof}

The intuition behind this proof is that since LLM inference is implicit Bayesian inference, its posterior estimation of the simulated world parameter gets more and more accurate, with information gap eventually converges. Given the optimality of the planning algorithm, the sub-optimality depends only on the model estimation error. So as information converges, the model esitmation also converges, leading to optimal policy in the end.

Meanwhile, it is also notable that one-shot method which correspond to the close-loop solution has no theoretical guarantee of efficiency, meaning that they are not theoretically convergent. This is because the information in LLM inference does not accumulates as iteration grows. However, in other works such method also works quite well, indicating the actual information gap after 1 shot may not be that large. Moreover, other works also do not explicitly incorporate a planning mechanism while also achieving good results, and this might be because the sub-optimality might not be that severe either in real-world practice.

\subsubsection{The outer loop}
\label{sec:analysis_efficiency_outer}
Efficiency in the outer loop essentially is the result from pessimism. We first assume that the optimal policy on the knowledge base can be trained via pessimism. In this way, the knowledge base can be regarded as a offline dataset and obtaining such an optimal policy can be regarded as performing offline RL on this dataset. \cite{jin_is_2021} states that the performance gap between the best policy learned offline and the best policy online can be bounded by the information gap, if the offline policy is learnt with pessimism. Our analysis will assume this and based on this to get our efficiency proof.

\begin{assumption}\label{assump:simu_optim_pessimism}
    Given a simulated environment parametrized by $\Bar{\theta}$ characterized by an offline knowledge base, the optimal policy $\Bar{\pi}$ can be obtained pessimistic value iteration (PEVI) \cite{jin_is_2021}.
\end{assumption}

\begin{lemma}\label{lem:offline_subo_bound}
    (Efficiency of Pessimism) Under Assumption \ref{assump:simu_optim_pessimism}, the performance gap between $\Bar{\pi}$ and $\pi^*$ in the real environment is bounded by the intrinsic uncertainty caused by insufficient coverage of the knowledge base.
\end{lemma}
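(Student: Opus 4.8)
The plan is to follow the pessimism-based suboptimality analysis of \cite{jin_is_2021}, transplanting it from their finite-horizon linear MDP to our discounted information-state MDP. First I would make the offline viewpoint precise: the knowledge base plays the role of a fixed dataset $\mathcal{D}$ of previously explored transitions whose rewards and next-states reflect the real environment $\theta$, while $\bar\theta$ is the empirical model fit to $\mathcal{D}$. PEVI then produces, together with $\bar\pi$, a family of pessimistic value estimates $\hat V$ obtained by subtracting an uncertainty penalty $\Gamma$ from each Bellman backup, i.e. $\hat Q(s,a) = (\hat{\mathbb{B}}\hat V)(s,a) - \Gamma(s,a)$ with $\bar\pi$ greedy with respect to $\hat Q$. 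Here $\hat{\mathbb{B}}$ denotes the empirical Bellman operator estimated from the knowledge base and $\mathbb{B}$ the true Bellman operator of the real environment.

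The core of the argument rests on two ingredients. The first is a good event $\mathcal{E}$ on which $\Gamma$ is a valid uncertainty quantifier, meaning $|(\hat{\mathbb{B}}\hat V)(s,a) - (\mathbb{B}\hat V)(s,a)| \le \Gamma(s,a)$ for all $(s,a)$ with high probability. Establishing $\mathcal{E}$ is what converts ``insufficient coverage of the knowledge base'' into a concrete quantity: where $\mathcal{D}$ covers $(s,a)$ densely the empirical backup concentrates and $\Gamma$ is small, whereas sparse coverage forces $\Gamma$ large. The second ingredient is the extended value-difference (performance-difference) identity, which I would use to write the suboptimality as a telescoping sum of the model evaluation errors $\iota(s,a) = (\mathbb{B}\hat V)(s,a) - \hat Q(s,a)$ taken along the trajectories of $\bar\pi$ and of $\pi^*$, plus a term measuring how far $\bar\pi$ departs from being greedy.

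With these in hand the bound follows in a few lines. On $\mathcal{E}$ the pessimistic construction forces $0 \le \iota(s,a) \le 2\Gamma(s,a)$: the lower bound is exactly pessimism (the penalty dominates the estimation error, since $\iota = (\mathbb{B}\hat V - \hat{\mathbb{B}}\hat V) + \Gamma$), and the upper bound is pessimism plus the error bound. The greedy choice of $\bar\pi$ makes the policy-gap term nonpositive, and the $\iota \ge 0$ contribution evaluated along $\bar\pi$'s own trajectory enters with a minus sign and is therefore nonpositive as well, so only the $\pi^*$-trajectory term survives. This yields, with high probability, a bound of the form $V^{\pi^*}_\theta(s) - V^{\bar\pi}_\theta(s) \le 2\,\mathbb{E}_{\pi^*}\bigl[\sum_{t=0}^\infty \gamma^t \Gamma(s_t,a_t) \mid s_0 = s\bigr]$: the performance gap is controlled by the accumulated uncertainty along the optimal policy's \emph{own} visitation distribution, which is precisely the intrinsic uncertainty from imperfect coverage of the knowledge base, as claimed.

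The step I expect to be the main obstacle is establishing the good event $\mathcal{E}$ together with an explicit, usable form for $\Gamma$. In \cite{jin_is_2021} this relies on the linear-MDP structure, where $\Gamma(s,a)$ becomes a feature-weighted confidence width proportional to $\lVert \psi(s,a)\rVert_{\Lambda^{-1}}$ with $\Lambda$ the empirical feature covariance accumulated from the dataset; porting this requires positing an analogous function-approximation assumption over our information states, which are high-dimensional and combinatorial, so that the concentration of $\hat{\mathbb{B}}$ around $\mathbb{B}$ can be made uniform over the relevant value-function class. A secondary technical point is that our MDP is infinite-horizon discounted rather than finite-horizon, so the telescoping identity and the union bound must be reorganized with the effective horizon $1/(1-\gamma)$ playing the role of $H$ and with $\gamma \in (0,1)$ guaranteeing the discounted sums converge; this keeps every quantity finite but must be tracked carefully throughout.
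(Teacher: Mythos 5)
Your proposal is correct and follows essentially the same route as the paper: the paper's ``proof'' is simply a deferral to Section 4 of \cite{jin_is_2021}, and what you have written is a faithful reconstruction of exactly that argument---PEVI with a valid uncertainty quantifier $\Gamma$, the extended value-difference decomposition, $0 \le \iota \le 2\Gamma$ on the good event, and the resulting bound $V^{\pi^*}_\theta(s) - V^{\bar\pi}_\theta(s) \le 2\,\mathbb{E}_{\pi^*}\bigl[\sum_{t=0}^\infty \gamma^t \Gamma(s_t,a_t) \mid s_0=s\bigr]$, which is precisely the ``intrinsic uncertainty'' named in the lemma. The caveats you flag at the end (replacing the finite horizon $H$ by the effective horizon $1/(1-\gamma)$, and positing a linear-MDP-type structure on information states so that $\Gamma$ has an explicit form) are exactly the adaptations the paper leaves implicit, gesturing at them only later via ``modest assumptions (e.g.\ linear MDP)'' in the proof of Theorem \ref{thm:overall_result}.
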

\begin{proof}
    The detailed proof can be found in Section 4 of \cite{jin_is_2021}.
\end{proof}

Lemma \ref{lem:offline_subo_bound} guarantees that under pessimism, the sub-optimality is only related to the estiamtion inaccuracy in model parameters between the real environment and the simulated one characterized by the knowledge base. And as the KB accumulates more and more information about the real environment, making the simulated one closer to the real one, the optimal policy trained using pessimism on this simulated environment should also have a converging performance gap. This theoreical results bridges the optimal policy in the real env and simu env, which together with the inner loop efficiency links together the gap between the policy learned by the agent and the optimal one in the real env.

\subsubsection{Overall result}
\label{sec:analysis_efficiency_overall}
Combining provable efficiency in the inner and outer loop, we can bridge the overall performance gap 

\begin{theorem}\label{thm:overall_result}
    The Bayesian regret of the LLM agent in the real environment, $\mathcal{R}(TK)$ in Eq. \ref{equ:overall_regret}, is sublinear in KT.
\end{theorem}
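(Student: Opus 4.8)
The plan is to control the per-step summand of $\mathcal{R}(TK)$ in Eq.~\ref{equ:overall_regret} by a telescoping decomposition that threads through the three environments of Figure~\ref{fig:analysis}: the agent's realized policy $\hat{\pi}_{t^k}$, which reasons about the LLM-inferred environment $\hat{\theta}$; the best policy $\bar{\pi}_k$ extractable from the knowledge base at outer iteration $k$, i.e. the pessimistic optimum of the simulated environment $\bar{\theta}_k$; and the real optimum $\pi^*$. For each $k$ I would insert $\bar{\pi}_k$ and write
\begin{equation}
\begin{aligned}
V^{\pi^*}_\theta(s_t) - V^{\hat{\pi}_{t^k}}_\theta(s_t)
&= \underbrace{\bigl[V^{\pi^*}_\theta(s_t) - V^{\bar{\pi}_k}_\theta(s_t)\bigr]}_{\text{outer}} \\
&\quad + \underbrace{\bigl[V^{\bar{\pi}_k}_\theta(s_t) - V^{\hat{\pi}_{t^k}}_\theta(s_t)\bigr]}_{\text{inner}},
\end{aligned}
\end{equation}
so that the first bracket is governed by the outer loop and the second by the inner loop.

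For the outer bracket, Lemma~\ref{lem:offline_subo_bound} bounds $V^{\pi^*}_\theta(s) - V^{\bar{\pi}_k}_\theta(s)$ by the intrinsic uncertainty $u_k$ induced by the coverage of the knowledge base at iteration $k$. Since this term is independent of the inner index, summing over $t=1,\dots,T$ contributes at most $T\,u_k$, and summing over outer iterations yields $T\sum_{k=1}^K u_k$. The decisive point is to argue $\sum_{k=1}^K u_k = o(K)$: as each outer iteration appends fresh real-world feedback, the coverage of the state–action region visited by $\pi^*$ improves, and a standard information-gain (elliptical-potential / eluder-dimension) argument drives the cumulative uncertainty to grow like $\sqrt{K}$, hence sublinearly.

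For the inner bracket, I would invoke a value-difference (simulation) lemma to move between real and simulated valuations,
\begin{equation}
\begin{aligned}
V^{\bar{\pi}_k}_\theta(s_t) - V^{\hat{\pi}_{t^k}}_\theta(s_t)
&= \bigl[V^{\bar{\pi}_k}_\theta(s_t) - V^{\bar{\pi}_k}_{\bar{\theta}_k}(s_t)\bigr] \\
&\quad + \bigl[V^{\bar{\pi}_k}_{\bar{\theta}_k}(s_t) - V^{\hat{\pi}_{t^k}}_{\bar{\theta}_k}(s_t)\bigr] \\
&\quad + \bigl[V^{\hat{\pi}_{t^k}}_{\bar{\theta}_k}(s_t) - V^{\hat{\pi}_{t^k}}_\theta(s_t)\bigr].
\end{aligned}
\end{equation}
The middle term is precisely the inner-loop regret measured in $\bar{\theta}_k$, which Lemma~\ref{lem:rafa_result} bounds sublinearly in $T$ after summation over $t$; by Assumption~\ref{assump:llm_bayesian_inference} the LLM posterior over $\hat{\theta}$ concentrates on $\bar{\theta}_k$, so the planned policy is $\epsilon$-optimal in the sense of Eq.~\ref{equ:eps_optim}. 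The two flanking terms are model-discrepancy terms controlled by the gap between $\theta$ and $\bar{\theta}_k$, i.e. by the same coverage quantity $u_k$ appearing in the outer bound.

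Collecting the pieces, the total regret is dominated by $\sum_{k=1}^K\bigl(T u_k + o(T)\bigr)$, which is $o(KT)$ once $\sum_k u_k = o(K)$ (outer convergence) and the per-iteration inner regret is $o(T)$ (inner convergence); concretely one expects a bound on the order of $T\sqrt{K} + K\sqrt{T}$, both sublinear in $KT$. I expect the main obstacle to be the bridge between the two loops: Lemma~\ref{lem:rafa_result} is stated for valuations under the inferred/simulated environment whereas $\mathcal{R}(TK)$ is measured under the real $\theta$, so the crux is (i) establishing the simulation lemma that transfers value functions across environment parameters and (ii) showing that the environment-mismatch terms it produces coincide with, and inherit the sublinear cumulative behaviour of, the coverage uncertainty $u_k$ delivered by the pessimism argument of Lemma~\ref{lem:offline_subo_bound}. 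This identification is what allows the inner and outer guarantees to be added without the cross terms spoiling sublinearity.
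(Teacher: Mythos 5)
Your proposal is correct and follows essentially the same route as the paper: your two-stage split, once expanded, is exactly the paper's four-term telescoping decomposition (Terms A--D in Eq.~\ref{eq:performance_gap}), with Term A and the flanking discrepancy terms B, D controlled by the pessimism/coverage bound of Lemma~\ref{lem:offline_subo_bound} (sublinear in $K$) and Term C controlled by the inner-loop guarantee of Lemma~\ref{lem:rafa_result} (sublinear in $T$). Your additional remarks on the elliptical-potential argument and the $T\sqrt{K}+K\sqrt{T}$ rate only make explicit what the paper defers to the cited works.
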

\begin{proof}
    We give a sketch of the proof here. Detailed proof of each term can be found in \cite{liu_reason_2023, jin_is_2021}.
    The performance gap can be essentially decomposed as follows:
    % \begin{align*}
    %     V^{\pi^*}_\theta(s_t) - V^{\hat{\pi}_{t^k}}_\theta (s_t) &= 
    %         \underbrace{V^{\pi^*}_\theta (s_t) - V^{\bar{\pi}}_\theta (s_t)}_{\text{Term A}} \\
    %          + \underbrace{V^{\bar{\pi}}_{\theta} (s_t) - V^{\bar{\pi}}_\bar{\theta} (s_t)}_{\text{Term B}} + & \underbrace{V^{\bar{\pi}}_\bar{\theta}(s_t) - V^{\hat{\pi}}_\bar{\theta}(s_t)}_{\text{Term C}} + \underbrace{V^{\hat{\pi}}_\bar{\theta}(s_t) - V^{\hat{\pi}}_\theta(s_t)}_{\text{Term D}}
    % \end{align*}
    \begin{align}
    V^{\pi^*}_\theta(s_t) - V^{\hat{\pi}}_\theta (s_t) &= 
        \underbrace{V^{\pi^*}_\theta (s_t) - V^{\bar{\pi}}_\theta (s_t)}_{\text{Term A}} \nonumber \\
        \quad + \underbrace{V^{\bar{\pi}}_{\theta} (s_t) - V^{\bar{\pi}}_{\mathclap{\bar{\theta}}} (s_t)}_{\text{Term B}} & + \underbrace{V^{\bar{\pi}}_{\mathclap{\bar{\theta}}}(s_t) - V^{\hat{\pi}}_{\mathclap{\bar{\theta}}}(s_t)}_{\text{Term C}} + \underbrace{V^{\hat{\pi}}_{\mathclap{\bar{\theta}}}(s_t) - V^{\hat{\pi}}_\theta(s_t)}_{\text{Term D}}  \label{eq:performance_gap}
    \end{align}
    When taking sum over $T$ and $K$, Term A, B, given pessimism assumption (Assumption \ref{assump:simu_optim_pessimism}) can be bounded by the intrinsic uncertainty in the offline dataset (lemma \ref{lem:offline_subo_bound}), which is sublinear in $K$ under modest assumptions (e.g. linear MDP) of the underlying MDP. Similarly, for Term D, it is also bounded by the information gap which is sublinear in $K$ in the same case. For Term C, according to Lemma \ref{lem:rafa_result}, it is sublinear in $T$. Hence, the LHS, when taken sum in Eq. \ref{equ:overall_regret}, is sublinear in $KT$.
    % For detailed proof, see Sec. \ref{sec:proof_overall}.
\end{proof}

% The proof essentially decomposes the performance gap into the four terms illustrated in the right of Figure \ref{fig:analysis}.
% \figureSelfImprovingRelevance
% \figureSelfImprovingGrpIC
% \figureSelfImprovingGrpRet
% \figureSelfImprovingGrpSp

% TODO: Analyze deeper with K/V difference at mind.
\subsection{Cost Analysis}
\paragraph{Token Cost}
In the self-improvement phase, the inner loop incurs a token cost that escalates with the square of the planning horizon $H$ and the number of close-loop interaction rounds $T$. This yields a computational complexity of $O(T^2H)$ for a single outer loop iteration. When considering multiple rounds of outer loop iterations, necessary for the agent's self-improvement, the cumulative token cost across $K$ such iterations scales to $O(KT^2H)$. During the inference phase, where the outer loop is not executed, the token cost remains at $O(T^2H)$, assuming that the complexity of generating responses does not change significantly from training to inference.

\paragraph{Time Cost}
The time cost for each iteration within the outer loop is influenced by both the planning horizon and a constant factor representing the baseline computational overhead. Assuming the computational time for key operations is constant and each value computation incurs one unit of time, a single outer loop iteration requires $T(H + \text{Const})$ units of time. Consequently, the entire training phase, encompassing $K$ iterations of the outer loop, demands $O(KTH)$ time. In the inference phase, devoid of the outer loop, the time complexity is reduced to $O(HT)$.

\section{Experiment}
In this section we introduce the experiment settings in our paper. Our experiment is conducted on a domain-specific scenario, so we will introduce tasks background, and then the most important setups in our experiment. For detailed information see appendix.

\subsection{Background}
Our task is financial signal mining. Financial signals are predictive signals computed from financial market data that can be used for financial prediction.
Designing financial signals requires good market understanding, code implementation skills, and mathematical skills such as numerical analysis.
The primary goal of this task is to get good financial signals that achieves high predictive power. Given a trading idea, we would also like the agent to improve its  We would also like to get a diversified set of signals on which we can build machine learning models to combine them to get better predictions.

\subsection{Problem Setup}
The general goal is to generate good financial signals evaluated by our measurements. To achieve this goal, at each outer loop iteration, the agent is given a trading idea, and then asked to generate financial signals that both reflects this trading idea while also achieves good performance (by taking other considerations). The trading idea is sampled from a distribution, in our experiment generated by another LLM. The implementation of a financial signal is essentially a piece of code (or equivalently a function) with pre-defined schema. Similar to FunSearch, we define a template shown in the appendix and let the LLM agent implement the function codes for that alpha.

\paragraph{Dataset} We selected a universe of 500 stocks on Chinese A-share market. The time range of these data is taken in year 2023. The basic market data used for computing financial signals contains the volume and price day collect at market close at each day.

\paragraph{Foundation LLM} We selected gpt-4-0125-preview as the version for our foundation model.

% \subsection{Baseline methods}
% We selected our baseline methods following this categorization:
% 1. No knowledge base, just update: CoT, ReAct
% 2. self-improving algorithms with different settings, e.g. w/ a constant knowledge base (XXX), updating KB but one-shot generation (no judge). 

\subsection{Evaluation protocol}
\paragraph{Predictive Performance}
To assess the predictive power of the financial signals, we calculate the Information Coefficient (IC) with respect to future returns. IC is the Pearson correlation computed for each cross-section (all stocks at a single time point) and then averaged across all time points. To determine the efficacy of the Knowledge Base (KB) as a foundational dataset for signal generation, we examine the IC of signals produced by a machine learning model, in this case, XGBoost regression trees, using the KB as the feature input. This evaluates the KB's potential as a reliable source for creating predictive models in finance. Additionally, we analyze the signal's Sharpe ratio to evaluate its ability to generate satisfactory investment returns.

\paragraph{Signal Quality}
For numerical quality, we look at the number of valid and unique entities the signals pertain to, ensuring they are capable of differentiating stocks. In terms of trading idea similarity, we measure how well the signals correspond to their underlying trading ideas. This relevance is critical for the potential use of LLMs as investment research assistants. To quantify this, we conduct pairwise comparisons where an LLM evaluates two signals generated from the same trading idea by different agents. The LLM selects the more accurate signal, and from these pairwise comparisons, we construct a win-rate matrix. The aggregated results from this matrix form a leaderboard that ranks the agents on their ability to produce signals that are true to the original trading concepts. This ranking offers insights into the proficiency of different agents in capturing the essence of trading ideas within their signal implementations.
\section{Results}
\label{sec:results}
% \figureSelfImprovingGrpValid
% \figureSelfImprovingAccuModel
% \figureSelfImprovingGrpModel

This section presents experimental results corroborating our theoretical findings. We highlight the main outcomes indicative of the agent's self-improvement capabilities and analyze the effect of inner and outer loop via different experiments.

\subsection{Evidence of Self-Improvement}
\label{subsec:evidence_of_self_improvement}
Figure \ref{fig:model_improve} shows the consistence performance improvement in terms of both training on cumulative signals and training on individual segments of alphas. The increasing predictive accuracy shows that, as the agent self-improves, it can produce stronger signals for better predictive power. Note that 

\figureSingleAlphaPerf

\figureRelevance
\figureSelfImprovingMdl
\subsection{Improving Alpha Relevance}
\label{subsec:improving_alpha_relevance}
To measure signal quality, we sorted the signals generated throughout the whole process according to their generated iterations, and segment them into equally-numbered groups for comparison in terms of accurately conveying the underlying trading idea, judge by GPT-4. Figure \ref{fig:relevance} shows the distribution of the win-rate. The pattern of accumulating winning rates to the upper right corner indicates that as the model evolves, the agent gains better skill in writing good-quality signals, verifying the effectiveness of self-improving. Notably, the diminishing pattern in the rightmost matrix demonstrate the effectiveness of both our inner and outer loop.

\subsection{Effect of the Inner and Outer Loop}
\label{subsec:effect_outer_loop}
Figure \ref{fig:single_alpha_perf} demonstrates the evolution of single-alpha attributes across the whole process. While it is not apparent to see significant performance differences, the trend of the blue curve demonstrates the effectivenss of self-improvement in all metrics.

\section{Discussion}
\label{sec:discussion}

Our research illustrates the efficacy of autonomous agents, particularly in quantitative investment, showcasing their capability for self-improvement and adaptation through simulated environments. This advancement not only highlights their potential as tools for financial analysis and risk management but also opens the door to their application across various complex domains beyond finance, such as healthcare and logistics, by customizing the knowledge base and feedback mechanisms. Despite these promising developments, challenges such as the dependency on knowledge base quality and the need for computational optimization remain. Future endeavors will aim to enhance the agents' learning efficiency, diversify their application scenarios, and explore real-time adaptation to dynamic environments. Ultimately, these efforts underscore the vast potential of LLM-based agents to revolutionize decision-making processes across industries, marking a significant stride towards the era of intelligent automation.

\section{Broader impact}
This paper presents work with the goal of advancing the research direction of building self-improving autonomous LLM agents. There are many potential societal consequences of our work, none of which we feel must be specifically highlighted here.

\bibliographystyle{icml2024}
\bibliography{references}

\appendix
\onecolumn

\section{Detailed Problem Backgrounds}
By discerning the patterns of market changes to write trading signals that can capture and profit from these patterns, in the field of quantitative finance, we typically refer to such trading signals or characteristics that have predictive power over future markets as alphas. Usually, one writes a program to calculate historical market data to implement the core logic and ideas behind this trading signal, and then validates through calculation and backtesting whether the trading signal can be profitable. Moreover, a single alpha signal is unlikely to cover all market characteristics. To more accurately predict future markets, a set of trading signals with low correlation to each other, covering various market dimensions, is needed. Combining these signals through model construction allows for an accurate market forecast.

Therefore, an excellent quantitative finance researcher needs to be able to translate their market insights effectively into a program that constructs these trading signals. There are two challenges here: on one hand, how to translate your observations and understanding of the market, which are often described in natural language, into a computer program accurately and fittingly; on the other hand, how to build a high-quality, predictive trading signal, which may involve skills and experience in constructing quality trading signals and programs, numerical processing, and refinement. You might need to try combining other signals with yours to make it more effective at predicting the market and profiting.

A human researcher often requires many years of experience and a high level of talent to perform this job well, and the efficiency is not high—it is difficult to rapidly build a rich set of alphas to enhance market prediction in the short term. A rich and diverse library of alphas with strong predictive capabilities usually requires years of accumulation.

After LLMs demonstrated their understanding of natural language and ability to write programs, some works, such as Alpha-GPT, have attempted to use large models to generate trading signal programs by understanding natural language trading ideas. However, there are technical challenges involved, among which the most significant is that to generate better signals that more closely match the trading ideas, the large model needs knowledge enhancement.

\subsection{An example alpha}
\begin{lstlisting}
from idea_factor import Factor
import pandas as pd
import numpy as np

class VolatilityBreakoutSignal(Factor):
    def __init__(self):
        self.name = "VolatilityBreakoutSignal"
        self.window_length = 14
        self.inputs = ["high", "low", "pre_close"]
    
    def calc(self, data):
        # Calculate the components of the True Range (TR)
        tr = np.maximum(data['high'] - data['low'], 
                        np.maximum(abs(data['high'] - data['pre_close'].shift(1)), 
                                   abs(data['low'] - data['pre_close'].shift(1))))
        
        # Calculate the Average True Range (ATR) using a rolling mean of the True Range
        atr = tr.rolling(window=self.window_length).mean()

        # Define breakout threshold as 1.5 times the ATR from the previous day's high
        breakout_threshold = data['high'].shift(1) + 1.5 * atr

        # Generate the signal indicating a volatility breakout
        signal = (data['high'] > breakout_threshold) * 1

        # Normalize the signal using the distance from the breakout threshold divided by ATR
        # to make it proportional to the size of the breakout
        normalized_signal = ((data['high'] - breakout_threshold) / atr).where(signal == 1, 0)

        # Ensure all values are non-negative, and fill any NaN values with 0
        normalized_signal = normalized_signal.clip(lower=0).fillna(0)

        return normalized_signal
\end{lstlisting}

\section{Detailed Experimental Settings}
\subsection{The financial signal mining runtime environment}
We use idea-factor a framework that we developed to define trading signal / alpha

In the idea-factor framework, trading signals are implemented by defining a new class that inherits from a base class named Factor. This base class handles technical aspects of signal calculation and maintenance.

For a researcher to develop a trading signal, all you should care about is following:

1. Defining Member Variables: The class must have three member variables:
    - name: The signal's name, which should also be reflected in the class name.
    - window-length: The historical data window length needed for each signal calculation.
    - inputs: Specifies the base data required for the signal calculation.
    
2. Implementing the calc Function: This is where the trading signal's logic is realized. The framework uses a loop to call this function, allowing the calculation of the trading signal for historical periods. You can use various Python computational libraries, like scipy, to enhance your calculations. Additionally, you can define multiple auxiliary member functions within the class for complex calculations, which can be utilized within the calc function.
    - Input Parameters : a dictionary called data is passed as an input parameter, containing base data mapped from inputs. Each key corresponds to an input variable, and the value is a pandas.DataFrame with rows as dates and columns as stocks.
    - Return Value: The function should return a pandas.DataFrame, with columns representing the universe of stocks and rows indexed by dates. The idea-factor framework will then use the last row of this DataFrame as the signal values for the current period.
    
\subsection{Base data}
In this experiment, we provide the following data for creating a trading signals. 
\begin{itemize}
    \item \textbf{close}: Closing Price
    \item \textbf{high}: Highest Price
    \item \textbf{low}: Lowest Price
    \item \textbf{open}: Opening Price
    \item \textbf{pre-close}: Previous Day's Closing Price
    \item \textbf{return}: The rate of change of today's closing price relative to the previous day's closing price
    \item \textbf{shares}: Total Shares Outstanding
    \item \textbf{tradenum}: Number of Trades
    \item \textbf{turnover}: Turnover Rate
    \item \textbf{value}: Transaction Amount
    \item \textbf{volume}: Trading Volume
    \item \textbf{vwap}: Volume-Weighted Average Price
\end{itemize}

\section{Detailed Experimental Results}
\subsection{A self-improving example of inner cycle}
Trading Idea: I observed that three consecutive candlesticks each closing higher than the preceding one. This pattern suggests strong buying pressure in the market. Please implement a trading signal based on this observation

Trading Signal implementation 1st version:

\begin{lstlisting}
class ThreeSoldierSignal(Factor):
    def __init__(self):
        self.name = "Three Advancing White Soldiers"
        # Assuming a 3-day pattern, extend window to capture necessary data for calculation
        self.window_length = 4  # Extra day to account for comparison with previous conditions
        self.inputs = ["close", "open"]

    def _check_pattern(self, opens, closes):
        """
        Check if the last three days form the (Three Advancing White Soldiers) pattern.

        """
        # Criteria for this pattern:
        # 1. Three consecutive days with higher closes than opens
        higher_close_than_open = (closes.shift(-1) > opens.shift(-1)) & (closes.shift(-2) > opens.shift(-2)) & (closes > opens)

        # 2. Each day closes higher than the previous day
        consecutive_higher_closes = (closes > closes.shift(-1)) & (closes.shift(-1) > closes.shift(-2))

        # 3. The opening of each day is within the body of the previous day
        # Ignoring the gap between days as it may not be strictly applicable in all markets
        open_within_previous_body = (opens.shift(-1) > opens.shift(-2)) & (opens.shift(-1) < closes.shift(-2)) & \
                                    (opens > opens.shift(-1)) & (opens < closes.shift(-1))

        # Combine conditions, if all true pattern detected
        if higher_close_than_open.iloc[-1] and consecutive_higher_closes.iloc[-1] and open_within_previous_body.iloc[-1]:
            return 1
        else:
            return 0
            
    def calc(self, data):
        # Initialize signal DataFrame
        signal = pd.DataFrame(index=data['close'].index, columns=data['close'].columns)

        # Iterate over columns (stocks) and calculate signal
        for stock in data['close'].columns:
            combined_signal = data['open'][stock].rolling(window=self.window_length).apply(
                lambda x: self._check_pattern(x, data['close'][stock]), raw=False
            )
            signal[stock] = combined_signal.fillna(0)  # Fill NaNs with 0
            # Note: Assumes the pattern's absence on days where window_length is not met

        return signal
\end{lstlisting}

Rejected by mentor agent with the following suggestions:

1. Refine the signal calculation to better distinguish between different strengths of the pattern. Currently, it only returns a binary output (1 for pattern detected, 0 otherwise), which lacks nuance for quantitative models.

2. Consider adjusting the rolling window logic to ensure that only the necessary number of days (exactly 3) are evaluated for the pattern, to improve computational efficiency

3. Incorporate an additional condition or metric to evaluate the strength and reliability of the pattern, such as volume or the size of the candlesticks, to provide a more informative and differentiated signal.

Trading Signal implementation 2nd version:

\begin{lstlisting}
class ImprovedThreeSoldierSignal(Factor):
    def __init__(self):
        self.name = "ImprovedThreeSoldierSignal"
        # Adjusting window length to exactly 3 days as required for the pattern
        self.window_length = 3  # Only the 3 days relevant for the pattern
        self.inputs = ["close", "open", "volume"]  # Including volume as a condition for improved signal strength

    def _check_pattern(self, opens, closes, volumes):
        """
        Check the last three days for the Three Advancing White Soldiers pattern 
        and quantify its strength.
        """
        # Checking basic pattern criteria
        higher_close_than_open = (closes.shift(-2) > opens.shift(-2)) & (closes.shift(-1) > opens.shift(-1)) & (closes > opens)
        consecutive_higher_closes = (closes.shift(-2) < closes.shift(-1)) & (closes.shift(-1) < closes)

        # Additional strength evaluations
        increasing_volume = (volumes.shift(-2) < volumes.shift(-1)) & (volumes.shift(-1) < volumes)

        # Calculate signal strength based on normalized candlestick body size and volume increase
        if higher_close_than_open and consecutive_higher_closes:
            pattern_strength = (closes - opens).mean() / (closes.max() - opens.min())
            volume_strength = increasing_volume.mean()
            overall_strength = (pattern_strength + volume_strength) / 2
            return overall_strength
        else:
            return 0

    def calc(self, data):
        # Initialize signal DataFrame
        signal = pd.DataFrame(index=data['close'].index, columns=data['close'].columns)

        # Iterate over columns (stocks) and calculate the improved signal
        for stock in data['close'].columns:
            combined_signal = data['open'][stock].rolling(window=self.window_length).apply(
                lambda x: self._check_pattern(x, data['close'][stock], data['volume'][stock]), raw=False
            )
            signal[stock] = combined_signal.fillna(0)  # Fill NaNs with 0
            # Note: Assumes the pattern's absence on days where window_length is not met

        return signal
\end{lstlisting}

Rejected by mentor agent with the following suggestions:

1. The '-check-pattern' function returns 0 when the pattern does not match, which causes a lack of differentiation between the absence of the pattern and weak patterns. Consider returning a negative value for non-patterns to differentiate from weak patterns.
2.  In the '-check-pattern' function, 'higher-close-than-open' and 'consecutive-higher-closes' are boolean conditions. Their calculation does not account for vectorized operations correctly because the 'if' condition expects a singular truth value. Use aggregate functions (.all()) to handle series of booleans."
3. The calculation of 'pattern-strength' and 'volume-strength' might result in unbound values, especially 'volume-strength' as 'increasing-volume.mean()' can be misleading because 'increasing-volume' is a boolean condition. Refine strength calculations to ensure meaningful signal values

Trading Signal implementation 3rd version:

\begin{lstlisting}
class ThreeSoldierSignalV3(Factor):
    def __init__(self):
         self.name = "ThreeSoldierSignalV3"
         self.window_length = 3
         self.inputs =["close", "open", "volume"]
         
    def _calculate_pattern_strength(self, opens, closes, volumes):
        """
        Calculates the strength of the pattern based on closing prices,opening prices,and volume
        """
        # Criteria evaluation
        higher_close_than_open = (closes > opens).all()
        consecutive_higher_closes =(closes.diff().iloc[1:] > 0).all()
        volume_increasing = (volumes.diff().iloc[1:] > 0).all()
        
        # Signal strength calculation criteria
        if higher_close_than_open and consecutive_higher_closes:
            close_open_ratio = ((closes-opens) / opens.sum()).sum()
            volume_ratio = np.log(volumes.iloc[-1] / volumes.iloc[0]) if volume_increasing else 8
            # Adjusted overall strength based on magnitude of movements and volume changes
            overall_strength=(close_open_ratio * 0.7) + (volume_ratio * 0.3)
            # Ensuring the strength lies between -1 and 1
            overall_strength = max(min(overall_strength, 1), -1)
            return overall_strength
        else:
            # Differentiating from weak patterns by returning a slightly negative value
            return -0.1 # Indicates absence of pattern
    def calc(self, data):
        # Initialize signal DataFrame
        signal = pd.DataFrame(index=data['close'].index, columns=data['close'].columns)
        # Iterate over columns(stocks)and calculate the finalsignal
        for stock in data['close'].columns:
            combined_signal = data['open'[stock].rolling(window=self.window_length).apply(
                lambda x: self._calculate_pattern_strength(x, data['close'][stock], data['volume'][stock]), 
                raw=false
            )     
            signal[stock]= combined_signal.fillna(-0.1)# Filling NaNs with -0.1 to differentiate from absence of pattern
            
        return signal
    \end{minted}
\end{lstlisting}

mentor agent passed this implementation with the final review:
The implementation of the ThreeSoldierSignalV3 trading signal is quite sound, capturing the essence of the candlestick pattern of the trading idea. Its logic analyzes a three-day window for each stock, focusing on whether the close prices are higher than the open prices, if the closing prices are consecutively higher, and if trading volume is increasing, which are crucial factors for identifying this bullish pattern. The unique approach of calculating the pattern's strength based on the magnitude of price changes and volume variations, then balancing these factors to determine an overall strength score between -1 and 1, effectively captures the signal's intent. The use of logarithmic scaling for volume change and the thoughtful consideration of filling NaN values with -0.1 to signify the absence of the pattern demonstrates a careful and innovative handling of the signal's nuances

\end{document}